\documentclass{article}
\pdfoutput=1
\usepackage[margin=28mm]{geometry}
\usepackage{amsfonts}
\usepackage{algorithmic}
\usepackage{algorithm}
\usepackage{amsmath}
\usepackage{amssymb}
\usepackage{multirow}
\usepackage{graphicx}
\usepackage{natbib}
\usepackage{url}
\usepackage{tikz}
\usepackage{paralist}

%% User-defined macros should be placed here, but keep them to a minimum.

\def\BIC{\textsc{bic} }
\def\PC{\textsc{pc} }

\def\DAG{\textsc{dag} }
\def\CPDAG{\textsc{cpdag} }
\def\GES{\textsc{ges} }

\def\SEMSEV{$\textsc{gds}_{\textsc{eev}}$ }

\def\SEMs{structural equation models }

\usetikzlibrary{arrows}

\setlength\plitemsep{2pt}

\newdimen\arrowsize
\pgfarrowsdeclare{arcsq}{arcsq}
{
  \arrowsize=0.2pt
  \advance\arrowsize by .5\pgflinewidth
  \pgfarrowsleftextend{-4\arrowsize-.5\pgflinewidth}
  \pgfarrowsrightextend{.5\pgflinewidth}
}
{
  \arrowsize=1.5pt
  \advance\arrowsize by .5\pgflinewidth
  \pgfsetdash{}{0pt} % do not dash
  \pgfsetroundjoin   % fix join
  \pgfsetroundcap    % fix cap
  \pgfpathmoveto{\pgfpoint{0\arrowsize}{0\arrowsize}}
  \pgfpatharc{-90}{-140}{4\arrowsize}
  \pgfusepathqstroke
  \pgfpathmoveto{\pgfpointorigin}
  \pgfpatharc{90}{140}{4\arrowsize}
  \pgfusepathqstroke
}

\newcommand{\independent}{\mbox{${}\perp\mkern-11mu\perp{}$}}
\newcommand{\notindependent}{\mbox{${}\not\!\perp\mkern-11mu\perp{}$}}

\DeclareMathOperator*{\argmin}{argmin}

\newcommand{\Cov}[1]{\mathrm{Cov}({#1})}
\newcommand{\C}[1]{\mathcal{#1}}
\newcommand{\law}[1]{\mathcal{L}({#1})}
\newcommand{\lawX}{{\law{\mathbf X}}}
\newcommand{\lawN}{{\law{\mathbf N}}}
\newcommand{\B}[1]{\mathbf{#1}}

\newcommand{\R}{{\mathbb R}}

\newcommand{\N}{{\mathbf N}}
\newcommand{\X}{{\mathbf X}}

\newcommand{\Gp}{\C{G}'}
\newcommand{\G}{\C{G}}
\newcommand{\SE}{\C{S}}

\newcommand{\var}[1]{{\mathrm{var}(#1)}}

\newcommand{\PA}[2][]{{\B{PA}}^{#1}_{#2}}
\newcommand{\CH}[2][]{{\B{CH}}^{#1}_{#2}}
\newcommand{\DE}[2][]{{\B{DE}}^{#1}_{#2}}
\newcommand{\ND}[2][]{{\B{ND}}^{#1}_{#2}}
\newcommand{\eref}[1]{(\ref{#1})}
\newcommand{\given}{\,\mid\,}

\newenvironment{proof}[1][\!. ]{{\bf Proof #1}}{\hfill$\square$\vskip\baselineskip}

\newtheorem{theorem}{Theorem}

\newtheorem{lemma}[theorem]{Lemma}

\newtheorem{remark}[theorem]{Problem}

%\graphicspath{{figures/}}

\begin{document}

\title{Identifiability of Gaussian structural equation models with equal error variances}

\author{
Jonas Peters\thanks{\{peters, buhlmann\}@stat.math.ethz.ch} \\
Seminar for Statistics\\
ETH Zurich\\
Switzerland
\and 
Peter B\"uhlmann$^*$ \\
Seminar for Statistics\\
ETH Zurich\\
Switzerland
}

%\author{\name Jonas Peters \email peters@stat.math.ethz.ch\\
%\name Peter B\"uhlmann \email buhlmann@stat.math.ethz.ch\\
%\addr Seminar for Statistics\\
%ETH Zurich\\
%Switzerland} 

\maketitle

\begin{abstract}
We consider structural equation models in which variables can be written as a function of their parents and noise terms, which are assumed to be jointly independent. Corresponding to each structural equation model, there is a directed acyclic graph describing the relationships between the variables. In Gaussian structural equation models with linear functions, the graph can be identified from the joint distribution only up to Markov equivalence classes, assuming faithfulness. In this work, we prove full identifiability if all noise variables have the same variances: the directed acyclic graph can be recovered from the joint Gaussian distribution. Our result has direct implications for causal inference: if the data follow a Gaussian structural equation model with equal error variances and assuming that all variables are observed, the causal structure can be inferred from observational data only. We propose a statistical method and an algorithm that exploit our theoretical findings.
\end{abstract}

\section{Introduction}
\subsection{Graphical and structural equation models}
For random variables $X_1, \ldots, X_p$, we define a graphical model as a pair $\{\G, \lawX \}$, where $\lawX = \law{X_1, \ldots, X_p}$ is a joint probability distribution that is Markov with respect to a directed acyclic graph $\G$ \citep[Chapter~3.2]{Lauritzen1996}.
Structural equation models, also referred to as a functional models, are related to graphical models. They are specified by a collection $\SE = \{S_1, \ldots, S_p\}$ of $p$ equations
\begin{equation} \label{eq:sem}
S_j: \quad X_j = f_j(X_{\PA{j}}, N_j)\quad (j=1, \ldots, p)
\end{equation}
and a joint distribution $\lawN = \law{N_1, \ldots, N_p}$ of the noise variables. Here, $\PA{j} \subset \{1, \ldots, p\} \setminus \{j\}$ denotes the parents of $j$. We require the noise terms to be jointly independent, so $\lawN$ is a product distribution. The graph $\G$ of a structural equation model is obtained by drawing directed edges from each variable $X_k$, $k \in \PA{j}$, occurring on the right-hand side of equation~\eqref{eq:sem} to $X_j$. The graph $\G$ is required to be acyclic.
%Thus, $X_{\PA{j}}$ coincides with the set of parents of $X_j$ in graph $\G$, see Section~\ref{sec:the} for notation.
Furthermore, given a structural equation model, the joint distribution $\lawX$ is fully determined and
$\lawX$ is Markov with respect to the graph $\G$ \citep[Theorem~1.4.1]{Pearl2009}.

\subsection{Identifiability from the distribution} \label{sec:ini}
We address the following problem. Given the joint distribution $\lawX = \law{X_1, \ldots, X_p}$ from a graphical model or from a structural equation model with directed acyclic graph $\G_0$, can we recover the graph $\G_0$?
By first considering graphical models one can easily see that the answer is negative: the joint distribution $\lawX$ is Markov with respect to different directed acyclic graphs, e.g., to all fully connected directed acyclic graphs. Thus, there are many possible graphical models $\{\G, \lawX \}$ for the same distribution $\lawX$. Similarly, there are structural equation models with different structures that could have generated the distribution $\lawX$.
%\subsection{Identifiability for Restricted Models}
%What can we do to overcome this indeterminacy? The hope is that 
By making additional assumptions one obtains restricted graphical models 
%$\C{SM}_{\text{restr}} \subset \C{SM}$ 
and 
restricted structural equation models
%$\C{GM}_{\text{restr}} \subset \C{GM}$
for which the graph is identifiable from the joint distribution. It is precisely here that the difference between graphical and functional models becomes apparent.

Given a graphical model, the distribution $\lawX$ is faithful with respect to the directed acyclic graph $\G_0$ if each conditional independence found in $\lawX$ is implied by the Markov condition. If faithfulness holds, one can obtain the Markov equivalence graph of the true directed acyclic graph $\G_0$ \citep{Spirtes2000}. 
But the Markov equivalence class may still be large \citep[cf.][]{Andersson1997} and the directed acyclic graph $\G_0$ is not identifiable. 
Furthermore, faithfulness in its full generality cannot be tested from data \citep{Zhang2008}. 
%Thus a violation of this condition can lead to arbitrarily wrong directed acyclic graphs. 
Since both the Markov condition and faithfulness only restrict the conditional independences in the joint distribution, it is not surprising that two graphs entailing the same conditional independences cannot be distinguished.

Structural equation models enable us to exploit a different type of restriction.
First, a general Gaussian structural equation model is equivalent to a Gaussian graphical model $\{\G_0, \lawX\}$, so the structure $\G_0$ is not identifiable from $\lawX$. 
%consider structural equation models with linear functions and normally distributed noise variables and $\lawX$ has been generated from such a model with graph $\G_0$. Again, the structure is not identifiable: for each graph in the Markov equivalence class, we can find a corresponding structural equation model that leads to exactly the same $\lawX$. 
Recently, however, it has been shown that this case is exceptional: (i) if we consider linear functions and non-Gaussian noise, one can identify the underlying directed acyclic graph $\G_0$ \citep{lingam}; (ii) if one restricts the functions to be additive in the noise component and excludes the linear Gaussian case, as well as a few other pathological function-noise combinations, one can show that $\G_0$ is identifiable from $\lawX$ \citep{Hoyer2008, Peters2011}. 
%\citet{Peters2011d} show that this result also holds for discrete variables. 
In this work, we prove that there is a third way to deviate from the general linear Gaussian case: (iii) Gaussian structural equation models where all functions are linear, but the normally distributed noise variables have equal variances $\sigma^2$, are again identifiable.
The identifiability results (i) and (ii) require a condition called causal minimality.
In its original form, \citet{Zhang2008} define causal minimality as follows:
for the true causal graph $\G_0$, $\lawX$ is not Markov to any proper subgraph of $\G_0$. 
Causal minimality is therefore a weak form of faithfulness. Remark~\ref{rem:cmi} shows that for proving (iii) we assume causal minimality.

It may come as a surprise that for a class of Gaussian structural equation models the underlying directed acyclic graph is identifiable. The assumption of equal error variances seems natural for applications with variables from a similar domain and is commonly used in time series models.

\subsection{Causal interpretation}
Our result has implications for causal inference. If $\G_0$ is interpreted as the causal graph of the data generating process for $X_1, \ldots, X_p$, the problem considered here is to infer the causal structure from the joint distribution. This is particularly interesting when the causal graph is of interest but interventional experiments are too expensive, unethical or even impossible to perform.
In the causal setting, our result reads as follows. If the observational data are generated by a Gaussian structural equation model that represents the causal relationships and has equal error variances, then the causal graph is identifiable from the joint distribution. 
%When the true data generating process follows a Gaussian \SEM, note that we can fit a Gaussian SEM with equal error variances for each member of the Markov equivalence class if we apply a fine-tuned rescaling of the variables $X_i \leadsto a_i X_i$. Therefore, the proposed method should only be applied, when the variables have a ``natural scaling''. But even if one encounters such a (wrong) fine-tuned scaling in practice, the graph provided by the method belongs to the correct Markov equivalence class. Despite this potentially important application, we present the statement and its proof without causal terminology.
Despite the potentially
important application in causal inference, we present the main statement
and its proof without causal terminology; in particular, equations~\eqref{eq:sem} and~\eqref{eq:mm} can be interpreted as holding in distribution.

%The paper is organized as follows: in section \ref{sec:the}, we prove the identifiability result and section \ref{sec:pra} describes the algorithm we propose. We show results of our experiments in section \ref{sec:exp} and conclude in section \ref{sec:con}.

\section{Identifiability for Gaussian models with equal error variances} \label{sec:the}
%\subsection{Notation} \label{sec:not}
We first introduce some notation.
The index set $\B{J} =\{1, \ldots, p\}$ corresponds to a set of vertices in a graph. Associated with $j \in \B{J}$ are random variables $X_j$ from $\B{X} = (X_1, \ldots, X_p)$.
Given a directed acyclic graph $\G$, we denote the parents of a node $j$ by $\PA[\G]{j}$, the children by $\CH[\G]{j}$, the descendants by $\DE[\G]{j}$ and the non-descendants by $\ND[\G]{j}$.

%{\it indices} $k$, such that $X_k \in \PA[\G]{X_j}$. Similarly, both $X_{\PA[\G]{j}}$ and $\PA[\G]{j}$ denote the same thing.
%Further, if the graph $\G$ is apparent from the context, we simply write $\PA{j}$. 
%\item
%$p_X(x)$ denotes the pdf (or pmf) of a random variable $X$,
%\item 
%$p_{\B{S}}(x_{\B{S}})$ denotes the joint pdf (or pmf) for a set of random variables $\B{S}$ evaluated at the point $x_{\B{S}}$. 
%\item
%We will assume that $\prob^{(X_i), i \in \B{V}}$ is absolutely continuous with respect to either the Lebesgue measure or the counting measure (i.e., either we have a pdf or a pmf, {\color{green} which is indeed the case if the noise variables are normally distributed}). Then $Y|_{X=x}$ is a RV that corresponds to the conditional density $p_{Y \given X=x}(y) = \frac{p_{X,Y}(x,y)}{p_X(x)}$.
%\end{itemize}

%\subsection{Main Result}
%We now formally specify our model. Let $\B{X}=\{X_1, \ldots, X_p\}$ be a finite set of variables. 
We consider a structural equation model with directed acyclic graph $\G_0$ of the form 
\begin{equation} \label{eq:mm}
X_j = \sum_{k \in \PA[\G_0]{j}} \beta_{jk} X_k + N_j \quad (j=1, \ldots, p)\,,
\end{equation}
where all $N_j$ are independent and identically distributed according to $\mathcal{N}(0,\sigma^2)$ with $\sigma^2 > 0$.
%{\color{green} (probably also possible: ``have the same variance'')} 
Additionally, for each $j \in \{1, \ldots, p\}$ we require $\beta_{jk} \neq 0$ for all $k \in \PA[\G_0]{j}$.

\begin{theorem} \label{thm:main}
Let $\lawX$ be generated from model~\eqref{eq:mm}.
Then $\G_0$ is identifiable from $\lawX$ and the coefficients $\beta_{jk}$ can be reconstructed for all $j$ and $k \in \PA[\G_0]{j}$.
\end{theorem}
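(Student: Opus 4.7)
The plan is to reconstruct $\G_0$ and every $\beta_{jk}$ directly from $\lawX$ via an explicit three-step procedure. Write $\Sigma := \Cov{\X}$.

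\emph{Identifying $\sigma^{2}$.} For any topological ordering $\pi$ of $\G_{0}$, the Markov property together with~\eqref{eq:mm} yields
\[
  \var{X_{\pi(j)}\given X_{\pi(1)},\ldots,X_{\pi(j-1)}} = \var{X_{\pi(j)}\given X_{\PA[\G_{0}]{\pi(j)}}} = \sigma^{2},
\]
because the extra conditioning variables are non-descendants of $\pi(j)$. Taking products over $j$ yields $\det\Sigma$ (standard $LDL^{\T}$ factorization of $\Sigma$ along $\pi$), so $\sigma^{2} = (\det\Sigma)^{1/p}$ is determined by $\lawX$.

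\emph{Identifying the topological levels.} Let $L_{k}$ be the $k$-th topological level of $\G_{0}$: $L_{1}$ is the set of sources, and for $k\geq 2$, $j\in L_{k}$ iff all parents of $j$ lie in $L_{<k}:=L_{1}\cup\cdots\cup L_{k-1}$ with at least one parent in $L_{k-1}$. I claim
\[
  L_{k} = \bigl\{\,j\notin L_{<k} : \var{X_{j}\given X_{L_{<k}}} = \sigma^{2}\,\bigr\},
\]
which identifies each $L_{k}$ from $\lawX$ inductively. Inclusion ``$\subseteq$'' follows from the Markov property applied to $\G_{0}$. For the converse, if the true level of $j$ strictly exceeds $k$, then the law of total variance applied to~\eqref{eq:mm} together with $N_{j}\independent X_{L_{<k}}$ gives
\[
  \var{X_{j}\given X_{L_{<k}}} = \sigma^{2} + \var{\textstyle\sum_{m\in\PA[\G_{0}]{j}}\beta_{jm}X_{m}\given X_{L_{<k}}}.
\]
The parents of $j$ in $L_{<k}$ contribute only a constant, while at least one parent $m\in\PA[\G_{0}]{j}$ must lie in $L_{\geq k}$. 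Solving the sub-SEM on $L_{\geq k}$ conditional on $X_{L_{<k}}$ shows $\Cov{X_{L_{\geq k}}\given X_{L_{<k}}} = \sigma^{2}(I-B_{L_{\geq k}})^{-1}(I-B_{L_{\geq k}})^{-\T}$, which is positive definite; combined with $\beta_{jm}\neq 0$ (causal minimality), this forces the quadratic form to be strictly positive.

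\emph{Identifying edges and coefficients.} For $j\in L_{k}$, all parents of $j$ lie in $L_{<k}$, so the Markov property gives $\mean[X_{j}\given X_{L_{<k}}] = \sum_{m\in\PA[\G_{0}]{j}}\beta_{jm}X_{m}$. Since $\Cov{X_{L_{<k}}}$ is positive definite, the linear-regression coefficients of $X_{j}$ on $X_{L_{<k}}$ are unique and equal $\beta_{jm}$ on $\PA[\G_{0}]{j}$ and zero elsewhere; causal minimality then reads off $\PA[\G_{0}]{j}$ as their support and each $\beta_{jk}$ as the corresponding value, completing the reconstruction. The main obstacle is the strict inequality in the second step: \emph{a priori}, cancellations among parents spread across several higher levels could make the extra variance vanish, and the resolution is to treat all of $L_{\geq k}$ jointly through the positive-definite conditional covariance displayed, so that causal minimality applied to the parents of $j$ in $L_{\geq k}$ already suffices.
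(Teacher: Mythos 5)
Your proof is correct, but it takes a genuinely different route from the paper's. The paper argues by contradiction: it supposes two models with distinct graphs $\G$ and $\Gp$ induce the same $\lawX$, strips away nodes on which the graphs agree, and then runs a careful $d$-separation case analysis (needed because faithfulness is not assumed) to isolate a single reversed edge between $L$ and $Y$ with common remaining parents; conditioning on those parents forces $\var{L^*}>\sigma^2$ under one graph and $\var{L^*}\leq\sigma^2$ under the other. You instead give a direct construction: recover $\sigma^2$ (via $\det\Sigma=\sigma^{2p}$, which also follows immediately from $\det(I-B)=1$; the paper uses $\sigma^2=\min_j\var{X_j}$), peel off the topological levels by testing whether $\var{X_j\given X_{L_{<k}}}=\sigma^2$, and read off parents and coefficients by regressing on $X_{L_{<k}}$. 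The two arguments share the same engine --- conditioning on a superset of the parents inside the non-descendants gives conditional variance exactly $\sigma^2$, while omitting a parent strictly inflates it, with the positive-definite conditional covariance of the sub-model on $L_{\geq k}$ ruling out cancellations --- but your level-peeling sidesteps the paper's entire graphical case analysis and yields the coefficient-reconstruction claim, plus an explicit population-level recovery algorithm, for free. What the paper's local argument buys in exchange is a sharper picture of the single configuration (one reversible edge with matching co-parents) that would have to exist for identifiability to fail, which connects the result to the Markov-equivalence-class viewpoint. Your two flagged steps are handled soundly; the one fact worth stating explicitly is that every node of level $\geq k$ has all of $L_{<k}$ among its non-descendants, which is what licenses $N_j\independent X_{L_{<k}}$ and hence both the variance decomposition and the formula $\Cov{X_{L_{\geq k}}\given X_{L_{<k}}}=\sigma^2(I-B_{L_{\geq k}})^{-1}(I-B_{L_{\geq k}})^{-\T}$.
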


\begin{remark}%[Proof Idea] 
\label{rem:proofidea}
The idea of the proof is to assume that there are two structural equation models with distinct graphs $\G$ and $\Gp$ that lead to the same joint distribution. % and then deduce a contradiction. 
We exploit the Markov condition and causal minimality, see Remark~\ref{rem:cmi}, in order to find variables $L$ and $Y$ that have the same set of parents $\B{S} = \{S_1, S_2\}$ in both graphs, but reversed edges between each other in $\G$ and $\Gp$, as shown in Fig.~\ref{fig:ii2}. %This case is treated in part (ii)-2 and contains the main argument of the proof.
\begin{figure}[t]
  \begin{minipage}[t]{0.48\columnwidth}
   \begin{center}
    \begin{tikzpicture}[xscale=1.4, yscale=0.6, line width=0.5pt, minimum size=0.58cm, inner sep=0.3mm, shorten >=1pt, shorten <=1pt]
    \normalsize
    \draw (-1,0) node(x) [circle, draw] {$L$};
    \draw (1,0) node(y) [circle, draw] {$Y$};
    \draw (-0.7,1.7) node(s1) [circle, draw] {$S_1$};
    \draw (0.7,1.7) node(s2) [circle, draw] {$S_2$};
    \draw[-arcsq] (s1) -- (x);
    \draw[-arcsq] (s2) -- (x);
    \draw[-arcsq] (s1) -- (y);
    \draw[dashed,-arcsq] (y) -- (x);
    \draw[-arcsq] (s2) -- (y);
   \end{tikzpicture}\\
Graph $\G$ 
 \end{center}
 \end{minipage}
 \hspace{0.02\columnwidth}
 \begin{minipage}[t]{0.48\columnwidth}
  \begin{center} 
    \begin{tikzpicture}[xscale=1.4, yscale=0.6, line width=0.5pt, minimum size=0.58cm, inner sep=0.3mm, shorten >=1pt, shorten <=1pt]
    \normalsize
    \draw (-1,0) node(x) [circle, draw] {$L$};
    \draw (1,0) node(y) [circle, draw] {$Y$};
    \draw (-0.7,1.7) node(s1) [circle, draw] {$S_1$};
    \draw (0.7,1.7) node(s2) [circle, draw] {$S_2$};
    \draw[-arcsq] (s1) -- (x);
    \draw[dashed,-arcsq] (x) -- (y);
    \draw[-arcsq] (s2) -- (x);
    \draw[-arcsq] (s1) -- (y);
    \draw[-arcsq] (s2) -- (y);
   \end{tikzpicture}\\  
Graph $\Gp$
  \end{center}
\end{minipage}
\caption{The situation dealt with in the second part of case (ii) of the proof of Theorem~\ref{thm:main}, with $\B{S}=\{S_1, S_2\}$ and $\B{D}=\emptyset$. It contains the proof's main argument.}
 \label{fig:ii2}
\end{figure}
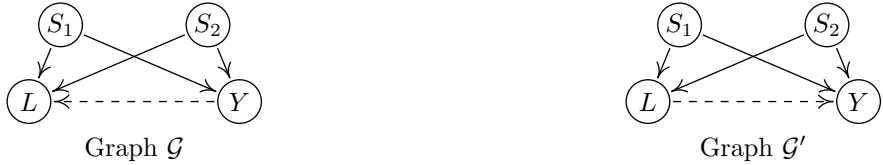
Defining $L^* = L{\given}_{\B{S} = s}$ for some value $s \in \R^2$, we can use the equal error variances to show that $L^*$ has different variances in both graphs. This leads to a contradiction.
\end{remark}

\begin{remark}%[Faithfulness and Causal Minimality] 
\label{rem:cmi}
Theorem~\ref{thm:main} assumes that the coefficients $\beta_{jk} \neq 0$ do not vanish for any $k \in \PA[\G_0]{j}$.
Lemma~\ref{lem:cmc} below and Proposition~2 in \citet{Peters2011} show that this condition implies causal minimality.
From our point of view, causal minimality is a natural condition and in accordance with the intuitive understanding of a causal influence between variables.

%In Section~\ref{sec:ini} we mentioned that methods based on conditional independence tests usually assume faithfulness. \citet{Zhang2008} show that, given the Markov condition and causal minimality, some violations of faithfulness are detectable. They call the non-detectable part triangle faithfulness, which is still stronger than causal minimality.
%The following definition is taken from \citep{Zhang2008} 
%\begin{definition}[causal minimality] \label{def:cmi}
%Let $\G$ be the true causal graph. Then $\prob^{(X_i), i \in \B{V}}$ is not Markov to any proper subgraph of $\G$. 
%\end{definition}

%\begin{lemma}\label{lem:causalmin}
%Let $X$ and $N$ be two independent random variables and $\alpha \neq 0$. If $X$ is non-deterministic, then
%$$
%X + N \notindependent X
%$$
%\end{lemma}

%\begin{cor} \label{cor:cmi}
%For each $\prob^{(X_i), i \in \B{V}}$, generated according to Theorem \ref{thm:main}, causal minimality holds.
%\end{cor} 
\end{remark}
%Note that $\mathcal S$ this is a subset of an $\mathcal F$-FMOC \citep{Peters2011} with $\mathcal F$ being all linear functions: namely, the collection of all models that are obtained when the noises $N_i$ have the same variance.

\begin{remark}
%[Error Covariance with Unknown Scaling] 
\label{rem:sca}
%\begin{itemize}
%\item The {\color{green} green} comments are related to whether we need the Gaussianity assumption. I believe, we do not. This, however, would only be interesting for the mixed case with say $10$ Gaussian and $10$ non-Gaussian variables because identifiability is already known if no variable (or at most one) is Gaussian \citep{Shimizu2006}.
%\item 
Theorem~\ref{thm:main} can be generalized to the case where the error covariance matrix has the form
$
\Cov{N_1, \ldots, N_p} = \sigma^2  \mathrm{diag}(\alpha_1, \ldots, \alpha_p)
$
with pre-specified $\alpha_1, \ldots, \alpha_p$ and unknown $\sigma^2$. 
%\end{itemize}
\end{remark}
\section{Penalized maximum likelihood estimator} \label{sec:pmle}
Consider data which are independent and identically distributed realizations of $\X^{(1)},\ldots ,\X^{(n)}$ from model \eqref{eq:mm} with true coefficients $\beta_{jk}^0$. The representation in vector form is
%\begin{eqnarray*}
$
\X =B \X + \N,
$
%\end{eqnarray*}
where $B$ is the $p \times p$ matrix with entries $B_{jk} = \beta_{jk}$. To make the manuscript easier to read we write $B$ or $\beta$ whenever we think of a matrix or a vector of parameters, respectively.
As estimator for the coefficients $B^0 = (\beta^0_{jk})_{j,k}$ and the
error variance $\sigma^2$, we consider 
\begin{eqnarray}\label{est}
\{\hat{\beta}(\lambda), \hat{\sigma}^2(\lambda)\} = \argmin_{\beta \in \mathcal{B},\sigma^2 \in \R^+} -
\ell(\beta,\sigma^2;\X^{(1)},\ldots ,\X^{(n)}) + \lambda \|\beta\|_0 \,,
\end{eqnarray}
where 
$$
-\ell(\beta,\sigma^2;\X^{(1)},\ldots ,\X^{(n)}) =
\frac{np}{2}\log(2\pi \sigma^2) + \frac{n}{2\sigma^2}\mathrm{tr}\{(I-B)^T (I-B)  \hat{\Sigma}\}\,,
$$
with sample covariance matrix $\hat{\Sigma}$,
is the negative log-likelihood assuming equal error variances $\sigma^2$ and $\|\beta\|_0 = |\{j,k\,:\,\beta_{jk} \neq
0\}|$. Furthermore, 
$\mathcal{B} = \{B \in \R^{p \times p}\,:\, \mathrm{Adj}(B) \text{ has only zero eigenvalues}\}$ contains 
only those coefficient matrices whose corresponding graphs do not have cycles 
% = \{(\beta_{jk})_{j,k=1,\ldots ,p};\
%\mbox{the non-zero elements of}$\\
%$(\beta_{jk})_{j,k}\ \mbox{correspond to a DAG}\}$.
\citep[p.81]{Cvetkovic1995}.
%(i.e. $\beta_{ij}$ and $\beta_{ji}$ cannot both be different from zero, for example).
Here, $\mathrm{Adj}(B)_{jk} = 1_{\beta_{jk} \neq 0}$ is the adjacency matrix.
Minimizing over all $\beta \in \mathcal{B}$ includes optimizing over all directed acyclic graphs, see Section~\ref{sec:gsa}.
The induced directed acyclic graph from
$\hat{\beta}(\lambda)$ is denoted by $\hat{\G}$. 
For $\lambda = \log(n)/2$ the objective function in equation~\eqref{est} is the \BIC score.
%It makes use of the assumption of equal error variances. Without this assumption the score would include $p$ different noise parameters.

%Convergence rate consistency of the penalized \MLE for the true structure ${\cal G}_0$ and the true coefficients $\beta^0_{jk}$ follows from 
The convergence rate and consistency of the penalized maximum likelihood estimator for the true
coefficients $\beta^0_{jk}$ and the true structure ${\cal G}_0$ follow from
an analysis in
\citet[Theorem~5.1]{vandeGeer2012}, under regularity conditions. More precisely, for $\lambda_n = \log(n)/2$ we
have
\begin{eqnarray}\label{propert}
& \sum_{j,k=1}^p \{\hat{\beta}_{jk}(\lambda_n) - \beta^0_{jk}\}^2 = O_P\{\log(n) n^{-1}\} \quad
&(n \to \infty)\,,\nonumber\\
& \mathrm{pr}(\hat{\cal G}_n = {\cal G}_0) \to 1 
& (n \to \infty)\,. \nonumber
\end{eqnarray}
The results in \citet[Section~5]{vandeGeer2012} also cover the high-dimensional sparse setting
where $p = p_n = O\{n/\log(n)\}$. 
%The parameter $\lambda$ in (\ref{est}) should then be chosen of the order $\sqrt{n\log(p)}$, and analogous results as in (\ref{propert}) are possible under additional assumptions on the minimal non-zero $\beta^0_{jk}$ (``beta-min condition'') and a condition on the error variances when using the wrong order among the variables. 
% Finally, the result in \citet{vandeGeer2012} is making an additional
%  condition for identifiability. Condition 5.1 in \citet{vandeGeer2012} is not
%  a-priori fulfilled for the case of fixed $p$; but using the result
%  established in our Theorem~\ref{thm:main}, indeed, it holds automatically.???Diese
%  letzten Saetze wohl eher weglassen???

One could use a combination of the PC-algorithm and
minimization of the penalized likelihood in equation~\eqref{est}: the former, which
is computationally very efficient, could be used for estimating the Markov
equivalence class and the latter for orienting remaining undirected
edges. A related approach has been
suggested by \citet{Tillman2009}. For consistency in the first step
one necessarily requires a version of the strong faithfulness assumption, which can be very
restrictive \citep{Uhler2013}. Penalized maximum likelihood
estimation does not need such an assumption \citep{vandeGeer2012} but pays a price in terms of computational complexity.

\section{Greedy search algorithm} \label{sec:gsa}
Because the optimization in equation~\eqref{est} is over the space of all directed acyclic graphs, the estimator is hard to compute. Already for $p=20$, there are $2.3 \times 10^{72}$ directed acyclic graphs \citep{OEIS}, which makes an exhaustive search infeasible. 
Instead, we propose a greedy procedure that we call greedy directed acyclic graph search with equal error variance. At each iteration $t$ we are given a directed acyclic graph $\G_t$ and move to the neighbouring directed acyclic graph with the largest drop in the \BIC score. If all neighbours have a higher \BIC score in equation~\eqref{est} than $\G_t$, the algorithm terminates. Here, we say that two directed acyclic graphs are neighbours if they can be transformed into each other by one edge addition, removal or reversal. \citet{Chickering2002} proposes a similar search strategy but with the search done in the space of Markov equivalence classes rather than over directed acyclic graphs.

In order to shorten the runtime, we randomly search through neighbouring directed acyclic graphs until we find a directed acyclic graph with a better score than $\G_t$ and use this directed acyclic graph for $\G_{t+1}$. We consider at least $k$ neighbours; if there are several directed acyclic graphs among the first $k$ with better scores than $\G_t$, we take the best one. The whole procedure further improves if we increase the probability of changing edges pointing into nodes whose residuals have a high variance. This modification and the score function are the only parts of the algorithm that make use of the equal error variances. Additionally, we restart the method five times starting from a random sparse graph with $k=p, k=2p, k=3p, k=5p$ and $k=300$. This choice is ad hoc but works well in practice, as it decreases the risk of getting stuck in a local optimum. R code for this method is available as Supplementary Material.

\section{Experiments}
\subsection{Existing methods}
We compare our method against the PC-algorithm \citep{Spirtes2000} and greedy equivalence search \citep{Chickering2002}. The latter approximates the \textsc{BIC}-regularized maximum likelihood estimator for non-restricted Gaussian structural equation models. Both methods can only recover the Markov equivalence class, see Section~\ref{sec:ini}, and therefore leave some arrows undirected. The Markov equivalence class can be represented by a completed partially directed acyclic graph. In the experiments, we report the structural Hamming distance between the true and estimated partially directed acyclic graphs; this assigns a distance of two for each pair of reversed edges, for example, $\rightarrow$ in the true and $\leftarrow$ in the estimated graph; all other edge mistakes count as one.

\subsection{Random graphs} \label{sec:rg}
For varying $n$ and $p$ we compare the three methods.  
For a given value $p$, we randomly choose an ordering of the variables with respect to the uniform distribution and include each of the $p(p-1)/2$ possible edges with a probability of $p_{\mathrm{edge}}$.
All noise variances are set to $1$ since scaling all noise variables with a common factor yields exactly the same estimates $\hat \beta$ and $\hat \G$. The coefficients $\beta^0_{jk}$ are uniformly chosen from $[-1,-0.1] \cup [0.1,1]$. 
We consider a sparse setting with $p_{\mathrm{edge}} = 3/(2p-2)$, which results in an expected number of $3p/4$ edges, and a dense setting with $p_{\mathrm{edge}} = 0.3$. 
Table~\ref{tab:sparse} shows the average structural Hamming distance to the true directed acyclic graph and to the true completed partially directed acyclic graph over $100$ simulations for the sparse setting.
Except for $p=40$ and $n=100$, the graphs estimated by the proposed method are closer to the true directed acyclic graph than the resulting graphs from state of the art methods, who can only recover the true Markov equivalence class; greedy directed acyclic graph search also performs better when comparing the distance to the true completed partially directed acyclic graph.
Table~\ref{tab:dense} shows the analogous results for the dense setting, in which the improvement with greedy directed acyclic graph search with equal error variances is even larger.

\begin{table}[ht]
%\begin{center}
%\begin{tabular}{r|c||c|c|c||c|c|c||c|c|c} 
\begin{tabular}{rcccccccccc} 
\multicolumn{2}{c}{} & \multicolumn{3}{c}{$n=100$} & \multicolumn{3}{c}{$n=500$} & \multicolumn{3}{c}{$n=1000$}\\ %\hline
$p$& & \SEMSEV & \PC & \GES & \SEMSEV & \PC & \GES& \SEMSEV & \PC & \GES \\ %\hline
\multirow{2}{*}{$5$}  & \DAG & $1.5$ &$3.9$&$3.6$& $0.5$&$2.9$&$2.8$& $0.4$&$3.0$&$2.5$ \\ %\hline
&                     \CPDAG & $1.5$ &$2.9$&$2.3$& $0.5$&$1.4$&$1.2$& $0.3$&$1.0$&$0.7$ \\ %\hline \hline
\multirow{2}{*}{$20$} & \DAG & $12.2$&$14.1$&$18.0$& $4.5$&$11.1$&$10.3$& $2.7$&$10.1$&$8.7$ \\ %\hline
&                     \CPDAG & $13.9$&$10.9$&$17.0$& $5.2$&$7.7$&$7.6$& $3.0$&$6.9$&$5.6$ \\ %\hline \hline
\multirow{2}{*}{$40$} & \DAG & $44.7$&$29.6$&$53.0$& $15.7$&$22.6$&$26.1$& $10.7$&$20.1$&$21.9$ \\ %\hline
&                     \CPDAG & $50.0$&$24.4$&$53.1$& $18.9$&$15.9$&$23.4$& $13.4$&$13.3$&$17.5$ 
\end{tabular}
\label{tab:sparse}
%\end{center}
\caption{Structural Hamming distance between estimated and true directed acyclic graph and estimated and true Markov equivalence class, for sparse graphs with $p$ nodes and sample size $n$.
\textsc{dag}, directed acyclic graph; \textsc{cpdag}, completed partially directed acyclic graph; $\textsc{gds}_{\textsc{eev}}$, greedy directed acyclic graph search with equal error variances; \textsc{pc}, PC-algorithm; \textsc{ges}, greedy equivalence search.}
\end{table}

\begin{table}[ht]
%\begin{center}
%\begin{tabular}{r|c||c|c|c||c|c|c||c|c|c} 
\begin{tabular}{rcccccccccc} 
\multicolumn{2}{c}{} & \multicolumn{3}{c}{$n=100$} & \multicolumn{3}{c}{$n=500$} & \multicolumn{3}{c}{$n=1000$}\\ %\hline
$p$& & \SEMSEV & \PC & \GES &                         \SEMSEV & \PC & \GES&    \SEMSEV & \PC & \GES \\ %\hline
\multirow{2}{*}{$5$}  & \DAG & $1.2$&$2.9$&$3.0$&       $0.6$&$2.4$&$2.2$&       $0.3$&$2.1$&$2.1$ \\ %\hline
&                     \CPDAG & $1.3$&$2.1$&$1.9$&       $0.5$&$1.2$&$0.7$&        $0.2$&$0.8$&$0.5$ \\ %\hline \hline
\multirow{2}{*}{$20$} & \DAG & $30.0$&$56.6$&$63.9$&    $12.5$&$55.7$&$66.3$&     $8.2$&$57.6$&$69.1$ \\ %\hline
&                     \CPDAG & $31.0$&$56.1$&$63.2$&    $13.1$&$55.5$&$66.2$&     $8.8$&$57.5$&$68.5$ \\ %\hline \hline
\multirow{2}{*}{$40$} & \DAG & $216.1$&$242.8$&$323.1$& $185.2$&$247.2$&$430.4$&  $172.0$&$248.9$&$470.6$ \\ %\hline
&                     \CPDAG & $217.1$&$242.4$&$323.0$& $185.7$&$247.0$&$430.1$&  $172.2$&$248.5$&$470.4$ 
\end{tabular}
\label{tab:dense}
%\end{center}
\caption{Structural Hamming distance between estimated and true directed acyclic graph and estimated and true Markov equivalence class, for dense graphs with $p$ nodes and sample size $n$.
\textsc{dag}, directed acyclic graph; \textsc{cpdag}, completed partially directed acyclic graph; $\textsc{gds}_{\textsc{EEV}}$, greedy directed acyclic graph search with equal error variances; \textsc{pc}, PC-algorithm; \textsc{ges}, greedy equivalence search.}
\end{table}

%\subsection{Non-Faithful Data} %done
As a proof of concept, we also simulate data with $n=500$ from a non-faithful distribution:  $X_1 = N_1$, $X_2 = - X_1 + N_2$ and $X_3 = X_1 + X_2 + N_3$. As stated by the theory, the PC-algorithm and greedy equivalent search fail here: in all $100$ experiments, they output $X_1 \rightarrow X_2 \leftarrow X_3$, which is not the correct Markov equivalence class. Greedy directed acyclic graph search always identified the correct directed acyclic graph.

\subsection{Deviation from equal error variances} \label{sec:devse} %done
When the data are generated by a Gaussian structural equation model with different error variances, the method is not guaranteed to find the correct directed acyclic graph or the correct Markov equivalence class.
When the true data generating process follows such a Gaussian structural equation model with different variances, we can always represent it as a model with equal error variances if we apply a fine-tuned rescaling of the variables $X_i \mapsto a_i X_i$ with $a_i$ equal to the inverse of the standard deviation of the error in the $i$th structural equation. Of course, such a rescaling is only possible
when knowing the error variances, hence the word fine-tuned. In the hypothetical case where the data would be scaled with such a deceptive fine-tuned standardization, the graph identified by our method would belong to the correct Markov equivalence class. We emphasize, however, that this is for an artificial scenario which is different from having raw data from a Gaussian structural equation model with different error variances.
%When the true data generating process follows a general Gaussian structural equation model, we can always rewrite the model as a Gaussian structural equation model with equal error variances if we apply a fine-tuned rescaling of the variables $X_i \mapsto a_i X_i$ with $a_i$ equal to the inverse of the standard deviation of the error in the $i$th structural equation. Of course, such a rescaling is only possible when knowing the error variances, hence the word fine-tuned. But even if one encounters such a deceptive fine-tuned scaling, the graph identified by our method belongs to the correct Markov equivalence class. 
An important question is how sensitive our method is to deviations from the assumption of equal error variances.
We investigate this empirically. For $p=10$ and $n=500$, we
sample the noise variances uniformly from $[1-a,1+a]$ and vary $a$ between $0$ and $0.9$.
Theorem~\ref{thm:main} establishes identifiability of the graph only for $a = 0$. 
As before, the coefficients $\beta^0_{jk}$ are uniformly chosen from $[-1,-0.1] \cup [0.1,1]$. The parameter $p_{\mathrm{edge}}$ is chosen to be $2/(p-1)$, on average resulting in $p$ edges; this is in between the sparse and the dense setting.
Figure~\ref{fig:dev} shows that the performance of greedy directed acyclic graph search is relatively robust as the parameter $a$ changes. Even for large values of $a$, the method does not perform worse than the PC-algorithm.
The best-score method reports the result of greedy directed acyclic graph search or greedy equivalence search depending on which method obtained the better score. Greedy directed acyclic graph search was chosen in $100\%$, $100\%$, $88\%$, $36\%$, $7\%$, $1\%$, $2\%$, $0\%$, $0\%$ and $0\%$ of the cases, for $a$ ranging between $0$ and $0.9$, respectively.
%100 100  88  36   7   1   2   0   0   0
 %In particular, for $a=0.4$, greedy equivalence search obtained better score in $93\%$ of the experiments, and performing slightly worse than greedy dag search.  
%The \SHD to the correct \CPDAG looks very similar to the \SHD to the correct \DAG (not shown).
\begin{figure} \label{fig:dev}
\begin{center}
\includegraphics[width=0.9\textwidth]{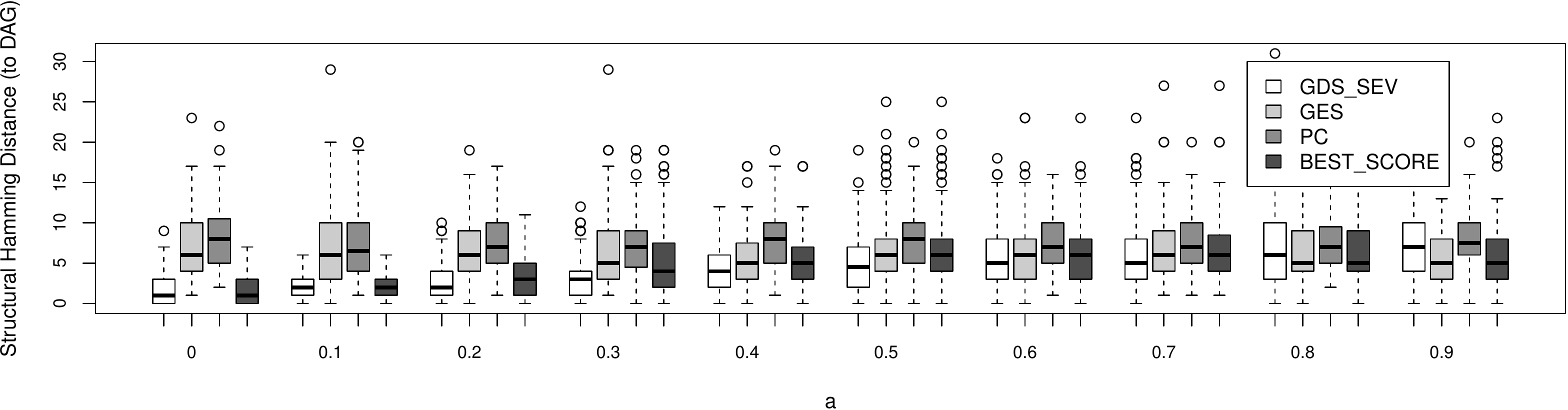}
\includegraphics[width=0.9\textwidth]{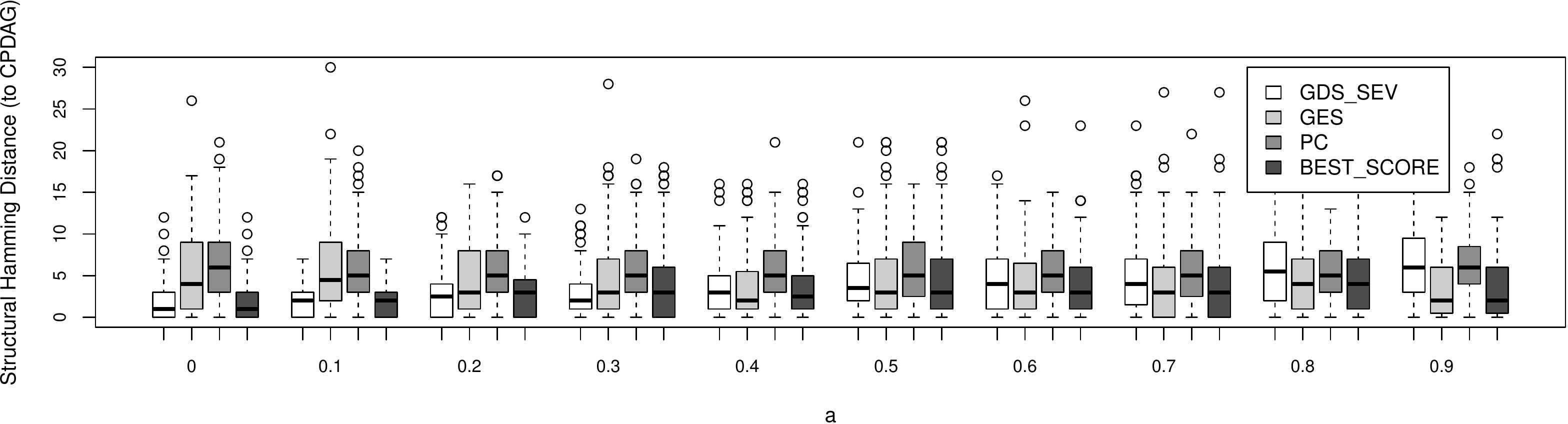}
\end{center}
\caption{Box plots for the structural Hamming distance of greedy directed acyclic graph search (white), greedy equivalence search (light grey), PC-algorithm (grey) and a best-score method (dark grey) to the true directed acyclic graph, \textsc{DAG}, (top) and to the true partially directed acyclic graph, \textsc{CPDAG}, (bottom). The graph shows various values of a measuring perturbation $a$ of equal error variances; only $a=0$ corresponds to equal error variances.}
%$a=1$ corresponds to different variances between $0.1$ and $1.9$.}
\end{figure}

\subsection{Real data} \label{sec:real}
We now apply the greedy equivalence search and greedy directed acyclic graph search to seven data sets containing microarray data, described by \citet{Dettling2003} and \citet{Buhlmann2013}, and compare their \BIC scores. When greedy equivalence search obtains the better score, this indicates that the assumption of equal error variances is not justified. In Figure~\ref{fig:dev} we have seen that even then it might sometimes be useful to look at the greedy directed acyclic graph search solution. If, on the other hand, greedy directed acyclic graph search obtains a better score than greedy equivalence search, we prefer the solution obtained by greedy directed acyclic graph search, which furthermore is a graph rather than a Markov equivalence class.
To avoid a high-dimensional setting with $p>n$, we always chose the $0.8 n$ genes with the highest variance.
Table~\ref{tab:real} shows that in two out of the seven data sets, greedy directed acyclic graph search obtained a better score than greedy equivalence search.
\begin{table}[ht]
%\begin{center}
\begin{tabular}{lccccccc} 
 & Prostate &  Lymphoma    & Riboflavin &   Leukemia & Brain & Cancer  & Colon  \\ %\hline \hline
%\GES    & ${\bf 4095}$ & ${\bf 4560}$ & ${\bf 2711}$ & ${\bf 5456}$ & $1411$ &${\bf 5891}$& $3224$ \\ %\hline
%\SEMSEV & $6057$ & $5404$       & $3236$ &       $5481$       & ${\bf 1343}$ &$6288$& ${\bf 3201}$ 
\GES    & $4095$ & $4560$ & ${2711}$ & ${ 5456}$ & $1411$ &${ 5891}$& $3224$ \\ %\hline
\SEMSEV & $6057$ & $5404$       & $3236$ &       $5481$       & ${1343}$ &$6288$& ${3201}$ 
\end{tabular}
\label{tab:real}
%\end{center}
\caption{\BIC scores of greedy equivalent search and greedy directed acyclic graph search on different type of microarray data; smaller is better. \textsc{ges}, greedy equivalence search; $\textsc{gds}_{\textsc{EEV}}$, greedy directed acyclic graph search with equal error variances.}
\end{table}
For the Colon example, greedy directed acyclic graph search proposes a directed acyclic graph with $192$ edges, greedy equivalence search a graph with $217$ edges. There are $91$ edges in both solutions, $61$ with the same orientation. The graphs therefore differ on roughly half of the edges.

\section*{Acknowledgement}
We thank R.~Tanase for fruitful discussions. The research leading to these results received funding from the European Union's Seventh Framework Programme. %(FP7/2007-2013) under REA grant agreement no 326496.

\appendix
%\section{Proof} \label{sec:proof}
\section*{Appendix}
\subsection*{Some lemmata} \label{sec:lem}
In the following two sections we consider different subsets of the set of variables $\B{X}$: to simplify notation we do not distinguish between indices and variables, since the context should clarify the meaning. This way, we can also speak of the parents $\PA[\G]{B}$ of a variable $B \in \B{X}$. We also consider sets of variables $\B{S} \subset \B{X}$ to be a single multivariate variable.

%Since the assumptions of an IFMOC are not satisfied, \citep{Peters2011} cannot be applied directly. However, we can reuse their lemmata \ref{lem:cond} and \ref{lem:noi}. Although they sound complicated at first, 
The following four statements are all plausible and their proofs mostly involve technicalities. The reader may skip to the next section %Section~\ref{sec:pro} 
and use the lemmata whenever needed.

\begin{lemma}\label{lem:cga}
Let $(A_1, \ldots, A_m) \sim \mathcal N\{(\mu_1, \ldots, \mu_m)^T, \Sigma\}$ with strictly positive definite $\Sigma$ and define 
$A_1^* = {A_1}_{\given (A_2, \ldots, A_m) = (a_2, \ldots, a_m)}$, in distribution.
Then $\var{A_1^*} \leq \var{A_1}$ for all $(a_2, \ldots, a_m) \in \R^{m-1}$.

\end{lemma}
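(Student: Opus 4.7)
The plan is to exploit the well-known explicit form of the conditional distribution of a jointly Gaussian vector. First I would partition the covariance matrix as
\[
\Sigma = \begin{pmatrix} \Sigma_{11} & \Sigma_{12} \\ \Sigma_{21} & \Sigma_{22} \end{pmatrix},
\]
where $\Sigma_{11} = \var{A_1}$, $\Sigma_{22}$ is the covariance matrix of $(A_2,\ldots,A_m)$, and $\Sigma_{12} = \Sigma_{21}^T$ collects the cross-covariances. Because $\Sigma$ is strictly positive definite, its principal submatrix $\Sigma_{22}$ is also strictly positive definite and hence invertible, so that the standard formula for Gaussian conditioning applies.

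Next I would invoke the classical result that $A_1^{*}$ is Gaussian with mean $\mu_1 + \Sigma_{12}\Sigma_{22}^{-1}\bigl((a_2,\ldots,a_m)^T - (\mu_2,\ldots,\mu_m)^T\bigr)$ and, crucially, variance
\[
\var{A_1^{*}} \;=\; \Sigma_{11} - \Sigma_{12}\Sigma_{22}^{-1}\Sigma_{21},
\]
the Schur complement of $\Sigma_{22}$ in $\Sigma$. A key observation is that this conditional variance is a constant function of the conditioning values $(a_2,\ldots,a_m)$, which already makes the bound uniform in the statement.

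Finally I would note that $\Sigma_{22}^{-1}$ is positive definite, so the quadratic form $\Sigma_{12}\Sigma_{22}^{-1}\Sigma_{21} = \Sigma_{21}^T \Sigma_{22}^{-1} \Sigma_{21} \geq 0$. Subtracting a non-negative number from $\Sigma_{11} = \var{A_1}$ yields $\var{A_1^{*}} \leq \var{A_1}$, which is exactly the claim. There is no real obstacle here; the only thing worth being careful about is justifying that $\Sigma_{22}$ inherits strict positive definiteness from $\Sigma$ so that the conditional distribution and the Schur-complement identity are well-defined for every conditioning value in $\R^{m-1}$.
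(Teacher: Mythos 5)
Your proposal is correct and follows essentially the same route as the paper: block-decompose $\Sigma$, apply the Gaussian conditioning formula to get $\var{A_1^*} = \Sigma_{11} - \Sigma_{12}\Sigma_{22}^{-1}\Sigma_{21}$, and use positive definiteness of $\Sigma_{22}^{-1}$ to conclude that the subtracted quadratic form is non-negative. Your added remark that $\Sigma_{22}$ inherits strict positive definiteness as a principal submatrix of $\Sigma$ is a small justification the paper leaves implicit.
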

%Here, $\overset{\mathcal{L}}{=}$ means that both sides have the same distribution. 
We use the notation of conditional variables rather than conditional distributions to improve readability.
\begin{proof}
Let us decompose $\Sigma$ into
$$
\Sigma = \left(
\begin{array}{cc}
\sigma_1^2 & \Sigma_{12}^T\\
\Sigma_{12} & \Sigma_{22}
\end{array}
\right)
$$
with an $(m-1) \times 1$ vector $\Sigma_{12}$. Since $\Sigma_{22}^{-1}$ is positive definite,
$\var{A_1^*} = \sigma_1^2 - \Sigma_{12}^T \Sigma_{22}^{-1} \Sigma_{12} \leq \sigma_1^2$.
\end{proof}

\begin{lemma}\label{lem:cond}
\citep{Peters2011}
Let $Y,N ,Q$ and $R$ be random variables taking values in $\C{Y}, \C{N}, \C{Q}$ and $\C{R}$, respectively, whose joint distribution is absolutely continuous with respect
to some product measure; we denote the densities by $p_{Y,Q,R, N}(y,q,r,n)$. 
The variables $Q$ and $R$ can be multivariate.
Let $f : \C{Y} \times \C{Q} \times \C{N} \to \R$ be a measurable function.
If $N \independent (Y,Q,R)$ then for all $q \in \C{Q},r \in \C{R}$ with $p_{Q,R}(q,r) > 0$:
$
f(Y,Q,N){\given}_{Q=q,R=r} = f(Y{\given}_{Q=q,R=r},q,N)
$,
in distribution.
\end{lemma}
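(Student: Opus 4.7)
The plan is to reduce the identity to the fact that conditioning on $(Q,R)=(q,r)$ preserves the marginal distribution of $N$ and leaves $N$ independent of $Y$ given $(Q,R)=(q,r)$. Once this is established, the deterministic function $f$ can be applied to both sides and equality in distribution follows immediately.

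First I would use the hypothesis $N \independent (Y,Q,R)$ together with the existence of densities with respect to a product measure to factor
\[
p_{Y,Q,R,N}(y,q,r,n) = p_{Y,Q,R}(y,q,r)\, p_N(n).
\]
Dividing by $p_{Q,R}(q,r)$, which is strictly positive by assumption, yields
\[
p_{Y,N \mid Q,R}(y,n \mid q,r) = p_{Y \mid Q,R}(y \mid q,r)\, p_N(n).
\]
This shows two things at once: conditional on $(Q,R)=(q,r)$, the variable $N$ still has its unconditional law $p_N$, and $N$ is conditionally independent of $Y$. Consequently the joint conditional law of $(Y,Q,N)$ given $(Q,R)=(q,r)$ coincides with the law of the triple $(Y^*,q,N)$, where $Y^*$ has the distribution of $Y{\given}_{Q=q,R=r}$ and $N$ has its original law, independent of $Y^*$.

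Next I would apply the measurable function $f$ to both triples. Since equality in distribution of random vectors is preserved under measurable mappings, we obtain
\[
f(Y,Q,N){\given}_{Q=q,R=r} \;\Lequ\; f(Y^*, q, N) \;=\; f\bigl(Y{\given}_{Q=q,R=r},\, q,\, N\bigr),
\]
which is exactly the claim. The factor $q$ in the middle slot is justified because, conditional on $Q=q$, the random variable $Q$ is almost surely equal to the constant $q$.

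The only delicate point is the measure-theoretic handling of the conditioning event $\{Q=q,R=r\}$, which has probability zero when $Q$ or $R$ is continuous. The absolute continuity assumption together with positivity of $p_{Q,R}(q,r)$ is precisely what makes the regular conditional density above well-defined, so this technicality does not require further work beyond invoking standard results on disintegration. I do not expect any other obstacle: once the density factorization is written down, the remainder of the argument is bookkeeping.
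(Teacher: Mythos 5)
Your proof is correct, and it follows essentially the same route as the argument in the cited reference \citet{Peters2011} (the paper itself states this lemma without proof): factor the joint density using $N\independent(Y,Q,R)$, divide by $p_{Q,R}(q,r)>0$ to see that conditionally on $(Q,R)=(q,r)$ the noise $N$ retains its marginal law and is independent of $Y{\given}_{Q=q,R=r}$, then push the resulting conditional law through the measurable map $f$. The only points needing care --- that $Q$ is degenerate at $q$ under the conditioning and that the conditional density is well defined on the positive-density event --- are both addressed adequately.
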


\begin{lemma}[\citet{Peters2011}]\label{lem:noi}
Let $\lawX$ be generated by a structural equation model as in~\eqref{eq:mm} with corresponding directed acyclic graph $\G$ and consider a variable $X \in \B{X}$. If $\B{S} \subseteq \ND[\G]{X}$ then $N_X \independent \B{S}$.
\end{lemma}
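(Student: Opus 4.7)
The plan is to show that every non-descendant of $X$ can be written as a deterministic measurable function of the noise variables $\{N_Y : Y \neq X\}$, and then exploit the joint independence of all noise terms in the model \eqref{eq:mm}.

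First I would fix a topological ordering of $\G$ that is consistent with the DAG structure, and argue by induction along this ordering. Concretely, for any node $Y$ in $\G$, the structural equation \eqref{eq:mm} expresses $X_Y$ as a linear function of its parents and $N_Y$; unfolding this recursively gives a representation $X_Y = g_Y\!\bigl((N_Z)_{Z \in \{Y\} \cup \mathrm{anc}(Y)}\bigr)$ for some measurable (in fact linear) function $g_Y$, where $\mathrm{anc}(Y)$ denotes the ancestors of $Y$ in $\G$. The key observation is the graph-theoretic fact: if $Y \in \ND[\G]{X}$, then $X \notin \{Y\} \cup \mathrm{anc}(Y)$, because $X$ being an ancestor of $Y$ would make $Y$ a descendant of $X$, contradicting $Y \in \ND[\G]{X}$.

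Applying this to every $Y \in \B{S} \subseteq \ND[\G]{X}$, the collection $\B{S}$ is a measurable function of the random vector $\B{N}_{-X} := (N_Z)_{Z \neq X}$. Since the noise variables are jointly independent by the definition of the structural equation model, $N_X$ is independent of $\B{N}_{-X}$. Independence is preserved under measurable transformations of the independent component, hence $N_X \independent \B{S}$, as claimed.

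The only mildly delicate step is making the recursive unfolding precise; I would handle it cleanly by inducting on the position in the topological order, so that at step $Y$ all parents of $Y$ are already expressed as measurable functions of noise terms indexed by strictly earlier (hence non-$X$, whenever $Y$ is a non-descendant of $X$) nodes. Everything else is a direct consequence of the joint independence of the $N_j$'s and the measurability of compositions.
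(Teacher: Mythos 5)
Your proof is correct, and it is the standard argument: every non-descendant of $X$ unfolds along a topological order into a measurable (linear) function of the noise terms of its ancestors and itself, none of which is $N_X$, so joint independence of the noise gives $N_X \independent \B{S}$. The paper itself gives no proof of this lemma, citing \citet{Peters2011} instead, and your argument is essentially the one used there, so there is nothing to contrast.
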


\begin{lemma} \label{lem:cmc}
Let $\law{\B{X}}$ be generated from a structural equation model as in~\eqref{eq:mm} with directed acyclic graph $\G$. Consider a variable $B \in \B{X}$ and one of its parents $A \in \PA[\G]{B}$. For all sets $\B{S}$ with $\PA[\G]{B}\setminus\{A\} \subseteq \B{S} \subseteq \ND[\G]{B} \setminus\{A\}$ we have
$B \notindependent A\,\mid\,\B{S}$. 
\end{lemma}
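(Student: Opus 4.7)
The plan is to prove non-independence by computing a conditional covariance and showing it is non-zero, exploiting the assumption $\beta_{BA} \neq 0$. Let $\B{S}' := \PA[\G]{B}\setminus\{A\}$, so $\B{S}' \subseteq \B{S}$. First I would write down the structural equation for $B$ in the form
$$B = \beta_{BA}\, A + \sum_{k \in \B{S}'} \beta_{Bk} X_k + N_B,$$
and condition on $\B{S}=s$. Since $\B{S}'\subseteq\B{S}$, the middle sum is a constant $c(s)$ under this conditioning, so as a distributional identity
$$B \mid (\B{S}=s) \;\Lequ\; \beta_{BA}\bigl(A \mid (\B{S}=s)\bigr) + c(s) + \bigl(N_B \mid (\B{S}=s)\bigr).$$
Because $\{A\}\cup\B{S}\subseteq \ND[\G]{B}$, Lemma~\ref{lem:noi} yields $N_B \independent (A,\B{S})$, so $N_B$ remains independent of $A$ after conditioning on $\B{S}$, and has conditional mean zero and its usual variance.

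The second step would be the covariance computation. Using the independence of $N_B$ from $(A,\B{S})$,
$$\Cov{B,A \mid \B{S}=s} \;=\; \beta_{BA}\cdot \var{A \mid \B{S}=s}.$$
To conclude $B \notindependent A \mid \B{S}$, it suffices to exhibit a single value $s$ at which this conditional covariance is non-zero; by hypothesis $\beta_{BA}\neq 0$, so the task reduces to proving $\var{A \mid \B{S}=s}>0$.

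For the strict positivity, I would note that $\lawX$ is the multivariate Gaussian law with covariance $\Sigma_{\B{X}} = \sigma^2(I-B)^{-1}(I-B)^{-\T}$, where $(I-B)$ is invertible because $\G_0$ is a DAG (so $B$ is nilpotent up to permutation). Hence $\Sigma_{\B{X}}$ is strictly positive definite, and so is any principal submatrix, in particular the submatrix $\Sigma_{(A,\B{S})}$ of $(A,\B{S})$. The Schur complement formula for Gaussian conditionals (as in the proof of Lemma~\ref{lem:cga}) then gives $\var{A \mid \B{S}=s} > 0$ for every $s$. Combining with the previous paragraph, $\Cov{B,A \mid \B{S}=s}\neq 0$ for all $s$, which contradicts $B \independent A \mid \B{S}$ and finishes the proof.

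The main obstacle, minor as it is, is the strict positivity of $\var{A \mid \B{S}=s}$: one must rule out that $A$ becomes a deterministic function of $\B{S}$ after conditioning, which could in principle happen if coefficients along different paths to $A$ conspired. The observation that the full covariance $\Sigma_{\B{X}}$ is positive definite (thanks to $\sigma^2 >0$ and the DAG structure) removes this worry globally, with no case analysis on which elements of $\B{S}$ are descendants or non-descendants of $A$.
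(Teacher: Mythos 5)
Your proof is correct and follows essentially the same route as the paper: isolate the structural equation for $B$, condition on $\B{S}=s$ so that the contribution of $\PA[\G]{B}\setminus\{A\}$ becomes a constant (the paper does this via Lemma~\ref{lem:cond}, you via Lemma~\ref{lem:noi} directly), and conclude conditional dependence from $\beta_{BA}\neq 0$ together with the independence of $N_B$. The only substantive difference is that you make the final step explicit by computing $\Cov{B,A\mid \B{S}=s}=\beta_{BA}\,\var{A\mid \B{S}=s}$ and verifying $\var{A\mid \B{S}=s}>0$ from the positive definiteness of $\sigma^2(I-B)^{-1}(I-B)^{-\T}$, a non-degeneracy point that the paper's one-line conclusion (``since $\beta\neq 0$, \ldots'') leaves implicit.
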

\begin{proof}
Define $Q = \PA[\G]{B}\setminus\{A\}$ such that we have $\B{S} = (Q,R)$ for some $R$. Using Lemma~\ref{lem:cond} we obtain:
$$
B{\mid}_{Q = q, R = r} = f(q) + \beta A{\mid}_{Q=q, R = r} + N_B\,, 
%B_{\given X_{\B{S}} = x_{\B{S}}} = f(x_{\PA[\G]{B}\setminus\{A\}}) + \beta \cdot A{\mid}_{X_{\B{S}}=x_{\B{S}}} + N_B
$$
in distribution, with $N_B \independent A{\mid}_{Q=q, R = r}$. But since $\beta \neq 0$,  %, according to Lemma~\ref{lem:causalmin}: 
$
A{\mid}_{Q=q, R = r} \notindependent B{\mid}_{Q=q, R = r}\,.
$
\end{proof}

\subsection*{Proof of Theorem~\ref{thm:main}.} \label{sec:pro}
%\input{proof2}
%We now give a formal proof.
%\begin{proof}
If we assumed faithfulness, we could recover the correct Markov equivalence class, which itself implies the existence of an $L$ and $Y$ shown in Remark~\ref{rem:proofidea} \citep[Theorem~2]{Chickering1995}. Since we are not assuming faithfulness, proving existence of a situation similar to that in Fig.~\ref{fig:ii2} requires more work. This part of the proof, due to not assuming faithfulness, is taken from \citet{Peters2011} and remains almost the same. The difference to \citet{Peters2011} is that we can prove causal minimality and need not assume it. New are also Lemmata~\ref{lem:cga} and \ref{lem:cmc}, as well as the proof's main argument given in the second part of case (ii).
%\end{proof}

%%%%%%%%%%%%%%%%%%%%%%%%%%%%%%%%%%%%%%%%%%%%%%%%%%%
%\input{proof}
\begin{proof}
%Let $\prob^{(L_i),i\in \B{V}}$ be Markov with respect to $\G$ and $\Gp$, which are both graphs with vertex set $\B{V}$, but possibly have completely different edge sets.
We assume that there are two \SEMs as in equation~\eqref{eq:mm} that both induce $\law{\B{X}}$%\footnote{To improve readability of the proof we write $\B{V}$ instead of $\B{X}$.}
, one with graph $\G$, the other with graph $\Gp$.
We will show that $\G = \Gp$.
%, where $\Gp$ is a member of the minimal Markov equivalence class. 
Since directed acyclic graphs do not contain any cycles, we always find nodes that have no descendants. To see this start a directed path at some node; after at most $\#\B{X}-1$ steps we reach a node without a child. Eliminating such a node from the graph leads to a directed acyclic graph, again; we can discard further nodes without children in the new graph. We repeat this process for all nodes that have no children in both $\G$ and $\Gp$ and have the same parents in both graphs. If we end up with no nodes left, the two graphs are identical and the result is proved. Otherwise, we end up with a smaller set of variables that we again call $\X$, two smaller graphs that we again call $\G$ and $\Gp$ and a node $L$ that has no children in $\G$ and either $\PA[\G]{L}\neq \PA[\Gp]{L}$ or $\CH[\Gp]{L} \neq \emptyset$.
We will show that this leads to a contradiction. Importantly, because of the Markov property of the distribution with respect to $\G$, all other nodes are independent of $L$ given $\PA[\G]{L}$:
\begin{equation} \label{eq_xind}
L \independent \B{X} \setminus (\PA[\G]{L} \cup \{L\}) \,\given\, \PA[\G]{L}\,.
\end{equation}

%Note that $L$ either has children in $\Gp$ or has different parent sets in $\G$ and $\G'$. 
To make the arguments easier to understand, we introduce the following notation, see also Fig.~\ref{fig:wcp}. We partition $\G$-parents of $L$ into $\B{Y}, \B{Z}$ and $\B{W}$. Here, $\B{Z}$ are also $\Gp$-parents of $L$, $\B{Y}$ are $\Gp$-children of $L$ and $\B{W}$ are not adjacent to $L$ in $\Gp$. Let $\B{D}$ be the $\Gp$-parents of $L$ that are not adjacent to $L$ in $\G$ and by $\B{E}$ the $\Gp$-children of $L$ that are not adjacent to $L$ in $\G$. 
\begin{figure}[t]
  \begin{minipage}[t]{0.48\columnwidth}
    \begin{center}
      \begin{tikzpicture}[xscale=1.08,yscale=0.84, line width=0.5pt, minimum size=0.58cm, inner sep=0.3mm, shorten >=1pt, shorten <=1pt]
	\small
        \draw (0,0) node(x) [circle, draw] {$L$};
        \draw (-1.3,1.2) node(w) [circle, draw] {$\B{W}$};
        \draw (0,1.2) node(y) [circle, draw] {$\B{Y}$};
        \draw (1.3,1.2) node(z) [circle, draw] {$\B{Z}$};
        \draw[-arcsq] (z) -- (x);
        \draw[-arcsq] (y) -- (x);
        \draw[-arcsq] (w) -- (x);
      \end{tikzpicture}\\
      \small part of $\G$
    \end{center}
  \end{minipage}
  \hspace{0.02\columnwidth}
  \begin{minipage}[t]{0.48\columnwidth}
    \begin{center}
      \begin{tikzpicture}[xscale=1.08,yscale=0.84, line width=0.5pt, minimum size=0.58cm, inner sep=0.3mm, shorten >=1pt, shorten <=1pt]
	\small
        \draw (0,-0.6) node(x) [circle, draw] {$L$};
        \draw (-1.3,0) node(d) [circle, draw] {$\B{D}$};
        \draw (1.3,0) node(z) [circle, draw] {$\B{Z}$};
        \draw (1.3,-1.2) node(e) [circle, draw] {$\B{E}$};
        \draw (-1.3,-1.2) node(y) [circle, draw] {$\B{Y}$};
        \draw[-arcsq] (z) -- (x);
        \draw[-arcsq] (d) -- (x);
        \draw[-arcsq] (x) -- (y);
        \draw[-arcsq] (x) -- (e);
      \end{tikzpicture}\\
      \small part of $\Gp$
    \end{center}
  \end{minipage}
  \caption{Nodes adjacent to $L$ in $\G$ and $\Gp$ \label{fig:wcp}}
\end{figure}
Thus:
$\PA[\G]{L} = \B{Y} \cup \B{Z} \cup \B{W}$, $\CH[\G]{L} = \emptyset,$
$\PA[\Gp]{L} = \B{Z} \cup \B{D}$, $\CH[\Gp]{L} = \B{Y} \cup \B{E}$.
Consider $\B{T} = \B{W} \cup \B{Y}$. We distinguish two cases.

%\begin{itemize}
%\item[(i)] 

Case (i):
$\B{T} = \emptyset$.
Then there must be a node $D \in \B{D}$ or a node $E \in \B{E}$, otherwise $L$ would have been discarded.
%\begin{itemize}
%\item[1.] 
If there is a $D \in \B{D}$ then \eref{eq_xind} implies
%\begin{equation} \label{eq:tem}
$L \independent D \given \B{S}$
%\end{equation}
for $\B{S} = \B{Z} \cup \B{D} \setminus \{D\}$, %. In the RFM for $\Gp$ we can choose $x_{\B{S}}$, such that $p(x_{\B{S}}) > 0$ and
%\begin{equation*}
%\big(g_L(x_{\B{S}}, \underbrace{\cdot}_{D},\underbrace{\cdot}_{N_L}), \prob^{D|_{L_{\B{S}}=x_{\B{s}}}}, \prob^{N_L}\big) \in \C{A}.
%\end{equation*}
%But then \eqref{eq:ind} reads
%\begin{equation*}
%D|_{L_{\B{S}}=x_{\B{s}}} \notindependent g_L(x_{\B{S}},D|_{L_{\B{S}}=x_{\B{s}}},N_L)=L|_{L_{\B{S}}=x_{\B{s}}}
%\end{equation*}
which contradicts Lemma~\ref{lem:cmc} applied to $\Gp$.
%\item[2.] 
If $\B{D}=\emptyset$ and there is $E \in \B{E}$ then 
%\begin{equation} \label{eq:tem2}
$E \independent L \given \B{S}$
%\end{equation}
holds for $\B{S}=\B{Z} \cup \PA[\Gp]{E} \setminus \{L\}$, which also contradicts Lemma \ref{lem:cmc}; to avoid cycles it is necessary that $\B{Z}\subseteq \ND[\Gp]{E}$ .% . Equations \eqref{eq:tem} and \eqref{eq:tem2} contradict Lemma \ref{lem:cmc}.
%\end{itemize}
%\item[(ii)] 

Case (ii):
$\B{T} \ne \emptyset$.
Then $\B{T}$ contains a $\Gp$-youngest node with the property that there is no
directed $\Gp$-path from this node to any other node in $\B{T}$. %(otherwise there would be a cycle in $\Gp$)
This node may not be unique.
%\begin{itemize}
%\item[1.] 

Suppose that $W \in \B{W}$ is such a youngest node. Consider the directed acyclic graph $\tilde \Gp$ that equals $\Gp$ with additional edges $Y \rightarrow W$ and $W' \rightarrow W$ for all $Y \in \B{Y}$ and $W' \in \B{W} \setminus \{W\}$. In $\tilde \Gp$, $L$ and $W$ are not adjacent. Thus we find a set $\B{\tilde S}$ such that $\B{\tilde S}$ $d$-separates $L$ and $W$ in $\tilde \Gp$; indeed, one can take 
%$\B{\tilde S} =  \big(\PA[\tilde \Gp]{L} \cup \CH[\tilde \Gp]{L} \cup \PA[\tilde \Gp]{}(\CH[\tilde \Gp]{L})\big) \setminus \big(\{L, W\} \cup \B{U} \cup \DE[\tilde \Gp]{}(\B{U})\big)$ with $\B{U} = \CH[\tilde \Gp]{L} \cap \CH[\tilde \Gp]{W}$. Here, by $\PA[\G]{}(\B{B})$ for some set $\B{B} \subset \B{X}$, we denote the union of all parents: $\bigcup_{B \in \B{B}} \PA[\G]{B}$.
$\B{\tilde S} = \PA[\tilde \Gp]{L}$ if $W \notin \DE[\tilde \Gp]{L}$ and $\B{\tilde S} = \PA[\tilde \Gp]{W}$ if $L \notin \DE[\tilde \Gp]{W}$.
Then $\B{S}=\B{\tilde S} \cup \{\B{Y},\B{Z},\B{W}\setminus \{W\}\}$ $d$-separates $L$ and $W$ in $\tilde \Gp$.

%\par
%\begingroup
%\leftskip=0.4cm % ggf. verstellen
%\noindent 
We now prove this claim. All $Y \in \B{Y}$ are already in $\B{\tilde S}$ in order to block $L \rightarrow Y \rightarrow W$. Suppose there is a $\tilde \G'$-path that is blocked by $\B{\tilde S}$ and unblocked if we add $Z$ and $W'$ nodes to $\B{\tilde S}$. %We choose the path, such that the minimal number of $Z$ and $W'$ is necessary to block the path, i.e. $k+l$ is as small as possible. 
How can we unblock a path by including more nodes? The path $L \cdots V_1 \cdots U_1 \cdots W$, see Fig.~\ref{fig:indeed}, must contain a collider $V_1$ that is an ancestor of a $Z$ with $V_1, \ldots, V_m, Z \notin \B{\tilde S}$ and corresponding nodes $U_i$ for a $W'$ node. % with $U_1, \ldots, U_r \notin \B{\tilde S}$. 
Choose $V_1$ and $U_1$ on the given path so close to each other such that there is no such collider in between. If there is no $V_1$, choose $U_1$ close to $L$, if there is no $U_1$, choose $V_1$ close to $W$. Now the path $L \leftarrow Z \cdots V_1 \cdots U_1 \cdots W' \rightarrow W$ is unblocked given $\B{\tilde S}$, which contradicts the fact that $\B{\tilde S}$ $d$-separates $L$ and $W$. This ends the claim's proof.
%\par
%\endgroup

The set $\B{S}$ $d$-separates $L$ and $W$ also in $\Gp$ because $\Gp$ contains less paths. We have
$
L \independent W \,\mid\,\B{S}
$
which contradicts Lemma \ref{lem:cmc} applied to $\G$. %: $A=W, B=L$).
Summarizing, $W \in \B{W}$ cannot be the $\Gp$-youngest node.
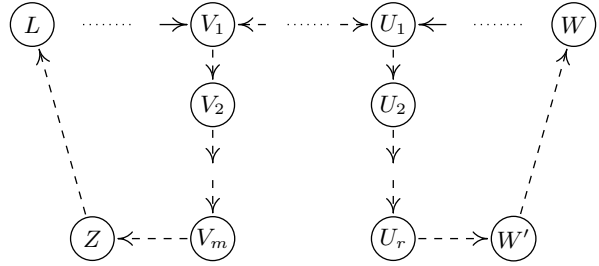
\begin{figure}[ht]
    \begin{center}
      \begin{tikzpicture}[xscale=0.8, yscale=0.89, line width=0.5pt, minimum size=0.58cm, inner sep=0.3mm, shorten >=1pt, shorten <=1pt]
	\small
        \draw (-3,0) node(x) [circle, draw] {$L$};
        \draw (6,0) node(w) [circle, draw] {$W$};
        \draw (0,0) node(v1) [circle, draw] {$V_1$};
        \draw (0,-1.2) node(v2) [circle, draw] {$V_2$};
        \draw (0,-3.2) node(vm) [circle, draw] {$V_m$};
        \draw (-2,-3.2) node(z) [circle, draw] {$Z$};
        \draw (3,0) node(u1) [circle, draw] {$U_1$};
        \draw (3,-1.2) node(u2) [circle, draw] {$U_2$};
        \draw (3,-3.2) node(us) [circle, draw] {$U_r$};
        \draw (5,-3.2) node(wp) [circle, draw] {$W'$};
        \draw (-1.3,0) node(x3) [circle, draw, white] {$L$};
        \draw (1.3,0) node(x4) [circle, draw, white] {$L$};
        \draw (1.7,0) node(x5) [circle, draw, white] {$L$};
        \draw (4.3,0) node(x6) [circle, draw, white] {$L$};
        \draw (0,-2.45) node(v3) [circle, draw, white] {$L$};
        \draw (0,-1.95) node(v4) [circle, draw, white] {$L$};
        \draw (3,-2.45) node(u3) [circle, draw, white] {$L$};
        \draw (3,-1.95) node(u4) [circle, draw, white] {$L$};
        \draw (1.7,0) node(x5) [circle, draw, white] {$L$};
        \draw (4.3,0) node(x6) [circle, draw, white] {$L$};
        \draw[-arcsq, dashed] (z) -- (x);
        \draw[-arcsq] (x3) -- (v1);
        \draw[-arcsq, dashed] (x4) -- (v1);
        \draw[-arcsq, dashed] (wp) -- (w);
        \draw[-arcsq, dashed] (v1) -- (v2);
        \draw[-arcsq, dashed] (v2) -- (v3);
        \draw[-arcsq, dashed] (v4) -- (vm);
        \draw[-arcsq, dashed] (vm) -- (z);
        \draw[-arcsq, dashed] (x5) -- (u1);
        \draw[-arcsq] (x6) -- (u1);
        \draw[-arcsq, dashed] (u1) -- (u2);
        \draw[-arcsq, dashed] (u2) -- (u3);
        \draw[-arcsq, dashed] (u4) -- (us);
        \draw[-arcsq, dashed] (us) -- (wp);
	\draw[dotted] (-2.2,0) -- (-1.3,0);
	\draw[dotted] (1.2,0) -- (2.0,0);
	\draw[dotted] (4.3,0) -- (5.2,0);
      \end{tikzpicture}
    \end{center}
  \caption{Assume the path $L \cdots V_1 \cdots U_1 \cdots W$ is blocked by $\B{\tilde S}$, but unblocked if we include $Z$ and $W'$. Then the dashed path is unblocked given $\B{\tilde S}$. \label{fig:indeed}}
\end{figure}
%\item[2.] 

Therefore,
the $\Gp$-youngest node in $\B{T}$ must be some $Y \in \B{Y}$.
It holds that
\begin{equation} \label{eq:gs}
\sigma_{\G}^2  = \sigma_{\Gp}^2 = \min_{X \in \B{X}} \var{X} = \sigma^2\,.
\end{equation}
We define
%  \begin{equation}\label{eq:def_S}
$\B{S}=\PA[\G]{L} \setminus \{Y\} \cup \B{D}$.
%  \end{equation}
Clearly, $\B{S} \subseteq \ND[\G]{L}$ since $L$ does not have any descendants in $\G$. 
Define $Q = \PA[\G]{L} \setminus \{Y\}$ and
take any 
%{\color{green} choose a clever)} 
$s=(q, d)$. Define
$L^* = L{\given}_{\B{S} = s}$, in distribution,  and  $Y^* = Y{\given}_{\B{S} = s}$, in distribution.
Then, from $\G$ and using Lemma~\ref{lem:cond} we find
%\begin{align*}
$L^* = f_L(q, Y^*) + N_L = f(q) + \beta \cdot Y^* + N_L$, in distribution, with $N_L \independent Y{\given}_{\B{S} = s}$.
%\end{align*}
The independence holds because $\B{S} \subseteq \ND[\G]{L}$. Then, we have
\begin{equation} \label{equ:g}
\var{L^*} = \beta^2  \var{Y^*} + \sigma^2 > \sigma^2\,.
\end{equation}
Since $\PA[\Gp]{L} \subseteq \B{S}$ we find from $\Gp$ and Lemma~\ref{lem:cga} that
\begin{equation} \label{equ:gp}
\var{L^*} \leq \sigma^2\,.
\end{equation}
since $\det \{\mathrm{cov} (\B{X})\} \neq 0$.
%{\color{green} (here, a new argument is needed for non-Gaussian data).} 
Equations~\eqref{equ:g} and \eqref{equ:gp} contradict each other.
%\end{itemize}
%\end{itemize}

To prove Remark~\ref{rem:sca}, replace $\var{X}$ by $\var{X} / \alpha_X$ in \eqref{eq:gs} and $\sigma^2$ by $\sigma^2 \alpha_X$ in~\eqref{equ:g} and \eqref{equ:gp}.
\end{proof}
%\hfill$\square$\vskip\baselineskip

%%%%%%%%%%%%%%%%%%%%%%%%%%%%%%%%%%%%%%%%%%%%%%%%%%%%%%%

%\section{A Score-Based Algorithm for Estimating the directed acyclic graph} \label{sec:pra}

%\section{Experiments} \label{sec:exp}

\bibliographystyle{plainnat}
\bibliography{bibliography}

%\subsection*{Appendix}

%\begin{prop}[\citep{Peters2011}]\label{prop:fai}
%Assume $\G_{c}$ is the true graph and assume $\prob$ is faithful and Markov with respect to $\G_{c}$. If $\prob$ is induced by a functional model from an IFMOC with $\Gp$ as corresponding directed acyclic graph we have $\#\text{edges in } \G_{c} \leq \#\text{edges in } \Gp$.
%\end{prop}

\end{document}